\def\eqref#1{equation~\ref{#1}}
\def\1{\bm{1}}
\DeclareMathAlphabet{\mathsfit}{\encodingdefault}{\sfdefault}{m}{sl}
\SetMathAlphabet{\mathsfit}{bold}{\encodingdefault}{\sfdefault}{bx}{n}
\def\sF{{\mathbb{F}}}
\def\sN{{\mathbb{N}}}
\def\sR{{\mathbb{R}}}
\def\sZ{{\mathbb{Z}}}
\newtheorem{definition}{Definition}
\newtheorem{proposition}{Proposition}
\newtheorem{lemma}{Lemma}
\newtheorem{remark}{Remark}
\begin{document}

%

%

\twocolumn[

\aistatstitle{xPerT: Extended Persistence Transformer}

\aistatsauthor{ Sehun Kim }

\aistatsaddress{ Seoul National University } ]

\begin{abstract}
  A persistence diagram provides a compact summary of persistent homology, which captures the topological features of a space at different scales.  However, due to its nature as a set, incorporating it as a feature into a machine learning framework is challenging. Several methods have been proposed to use persistence diagrams as input for machine learning models, but they often require complex preprocessing steps and extensive hyperparameter tuning. In this paper, we propose a novel transformer architecture called the \textit{Extended Persistence Transformer (xPerT)}, which is highly scalable than the compared to Persformer, an existing transformer for persistence diagrams. xPerT reduces GPU memory usage by over 90\% and improves accuracy on multiple datasets. Additionally, xPerT does not require complex preprocessing steps or extensive hyperparameter tuning, making it easy to use in practice. Our code is available at \url{https://github.com/sehunfromdaegu/xpert}.
\end{abstract}

\section{Introduction} 
\label{sec:intro}

Topological Data Analysis (TDA) uses ideas from topology to explore the shape and structure of data, revealing patterns that traditional statistical methods may not fully grasp. TDA is not only an active area of mathematical research but also has broad practical applications across various fields. One of the key tools in TDA is \textit{persistent homology}, which captures the multi-scale topological features of a dataset. A summary of persistent homology is provided by the \textit{persistence diagram}, a multiset of points in the plane.

Persistent homology has been utilized in a wide range of areas, including biomolecules~\citep{biomolecule, biomolecule2}, material science~\citep{material, material2}, meteorology~\citep{whether}, and image analysis~\citep{image, image2}. However, the integration of TDA with machine learning models remains a challenge due to several reasons: (1) persistence diagram is an unordered set, which is not a natural input for machine learning models, (2) each persistence diagram has a different number of points, which complicates batch processing. Several methods have been proposed to address this issue by converting persistence diagrams into fixed-size feature vectors~\citep{persistenceimage,persistencelandscape,atol}. However, these methods often require extensive hyperparameter tuning and may fail to encompass the full range of topological information in the data. Another line of research explores using neural networks to directly process persistence diagrams, such as PersLay~\citep{perslay} and PLLay~\citep{pllay}. While these models have shown promising results, their application is limited due to complex hyperparameter choices and implementation difficulties.

The transformer architecture~\citep{attention} has recently emerged as a powerful model in many domains, including natural language processing, computer vision, audio etc. Persistence diagrams have also been incorporated into transformer models, as demonstrated by \textit{Persformer} (Reinauer et al., 2022). However, Persformer faces scalability issues, and its training process is often unstable in practice. In this work, we propose a novel transformer architecture for (extended) persistence diagrams called Extended Persistence Transformer (xPerT). The xPerT can directly process persistence diagrams, without requiring common preprocessing steps often needed in existing methods. The xPerT is highly scalable in terms of training time and GPU memory usage, and it does not require extensive hyperparameter tuning. We demonstrate the effectiveness of the xPerT model on classification tasks using two datasets: graph datasets and a dynamical system dataset.

\begin{figure}
  \centering
  \includegraphics[scale=0.15]{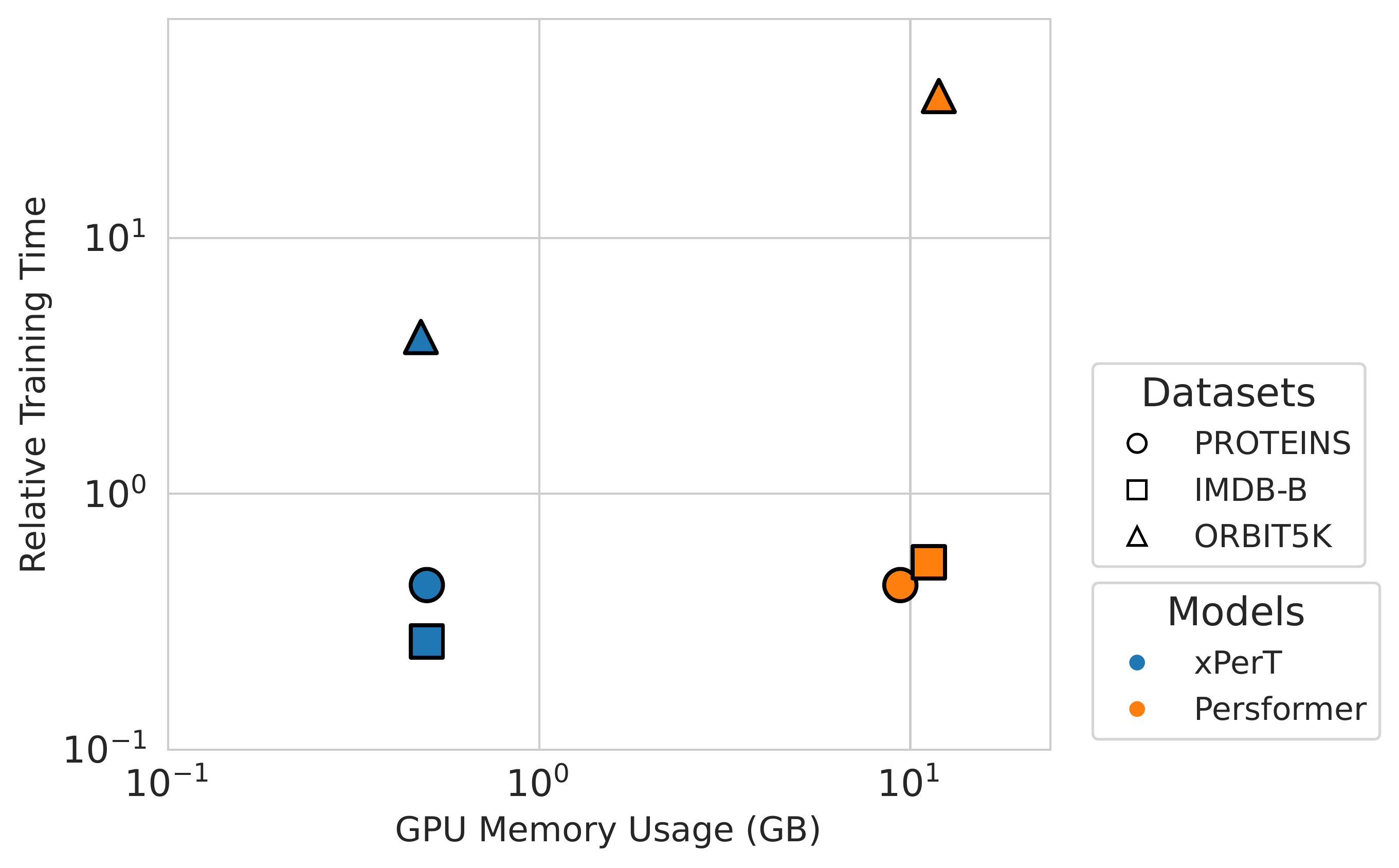}
  \caption{\textbf{Scaling.} Comparison of computational cost between persistence diagram transformer models in terms of training time and GPU memory usage (GB). The experiment was conducted using a batch size of 64 for the \texttt{PROTEINS} and \texttt{IMDB-B} datasets, and a batch size of 16 for the \texttt{ORBIT5K} dataset, as Persformer could not fit on our GPU (RTX 3090) with a batch size of 32.}
  \label{fig:scalability}
\end{figure}

\subsection{Related Works} \label{sec:related works}
The use of persistent homology in machine learning has been explored in various studies. One of the earliest approaches is the vectorization method, which converts a persistence diagram into a fixed-size feature vector. The \textit{persistence landscape} \citep{persistencelandscape} and \textit{persistence image} \citep{persistenceimage} are two popular vectorization methods. The persistence image also transforms a persistence diagram, but directly into a fixed-size vector by placing Gaussian kernels at each point and summing the values over a predefined grid. Both methods require the selection of numerous hyperparameters, which play a crucial role in determining the quality of the resulting vector, making the cross-validation process complex. \textit{ATOL} \citep{atol} offers an unsupervised vectorization approach by leveraging k-means clustering.

While most existing methods focus on vectorizing single-parameter persistent homology, extending these techniques to multi-parameter persistent homology is challenging due to the lack of a natural representation. However, some recent approaches have begun addressing these challenges, such as \textit{GRIL} \citep{gril} and \textit{HSM-MP-SW} \citep{multiparameter}.

Unlike vectorization methods, neural network-based approaches utilize persistence diagrams more directly. \textit{PersLay} \citep{perslay} maps a persistence diagram to a real number by:
\[
\text{PersLay}(D) \defeq \textit{op} \left(\left\{ w(p) \cdot \phi(p) \right\}_{p \in D}\right),
\]
where \(D\) denotes a persistence diagram, \(w: \sR^2 \to \sR^{2}\) is a learnable weight function, \smash{\(\phi: \sR^2 \to \sR^d\)} is a point transformation, and \textit{op} is a permutation-invariant operator. 
While PersLay processes persistence diagrams directly, \textit{PLLay} \citep{pllay} first computes the persistence landscape and then applies a neural network to the resulting feature vector.

\begin{figure}[t]
  \centering
  \includegraphics[scale=0.16]{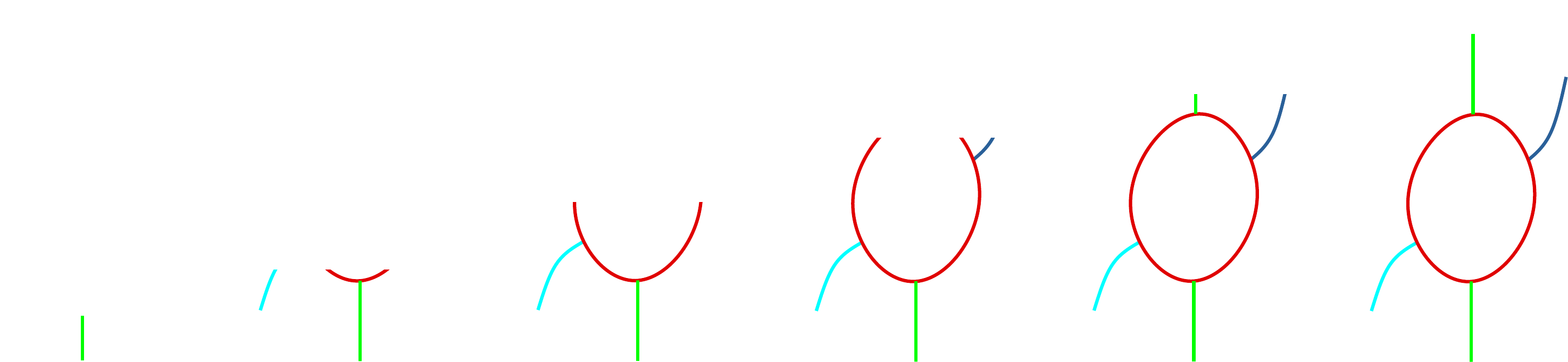}
  \caption{\textbf{Sublevel Set Filtration.} Six sublevel sets of the height function are shown. As \(c\) increases, the topology of \(X_{c}\) changes, which is represented by the ordinary persistence diagram. However, ordinary persistence cannot detect the appearance of the blue upright arm, which can instead be reflected by the superlevel set filtration \((X^{c})_{c \in \mathbb{R}}\) as \(c\) decreases.}\label{fig:filtration}
\end{figure}

More recently, \cite{persformer} introduced \textit{Persformer}, a model that applies a transformer architecture to persistence diagrams. In this approach, each point in the diagram is treated as a token, and the transformer operates without positional encodings. In batch processing, Persformer pads dummy points during preprocessing to ensure uniformity in the number of points across diagrams. Though simple, this method involves processing numerous tokens, leading to high computational costs and substantial GPU memory usage.

\paragraph{Contributions}
In this paper, we introduce the Extended Persistence Transformer (xPerT), a novel transformer architecture tailored for handling (extended) persistence diagrams in topological data analysis. Our main contributions are:
\begin{itemize}
   \item \textbf{Persistence Diagram Transformer}: We propose xPerT, which bridges the gap between persistence diagrams and transformer models by discretizing the diagrams into pixelized representations suitable for tokenization and input into the transformer architecture.
   \item \textbf{Scalability through Sparsity}: By leveraging the inherent sparsity of persistence diagrams, xPerT achieves high scalability in training time and GPU memory usage, making it efficient for large-scale applications.
   \item \textbf{Practical Implementation}: Our method is easy to implement and requires minimal hyperparameter tuning, lowering the barrier for practitioners and facilitating quick adoption.
\end{itemize}

\begin{figure*}[]
  \centering
  \includegraphics[scale=0.30]{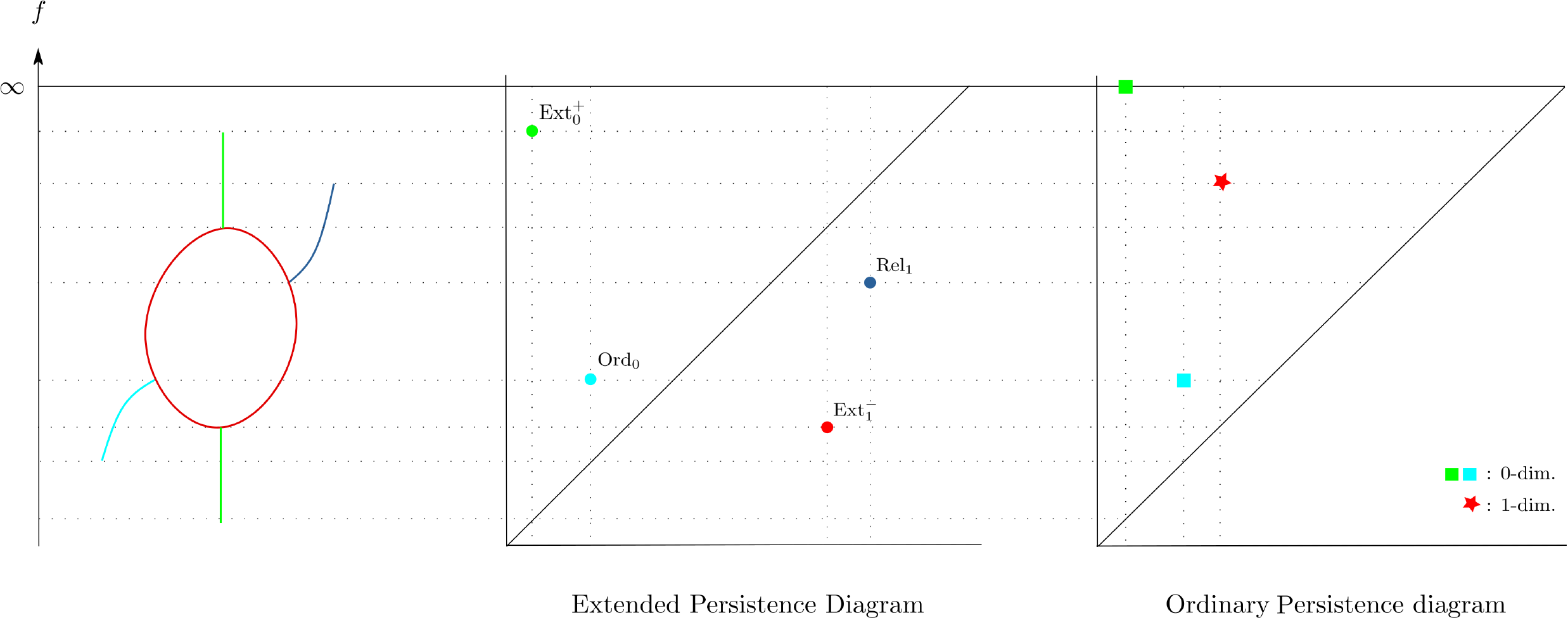}
  \caption{\textbf{Persistence Diagram.} The extended persistence diagram includes information about the blue upright arm and the maximum value of \(f\), which are not present in the ordinary persistence diagram. 
  (Left) A topological space equipped with a height function \(f\). (Center) Extended persistence diagram. (Right) Ordinary persistence diagram.}

  \label{fig:extendedpersistence}
\end{figure*}

\section{Background}\label{sec:background}
\subsection{Persistent Homology}

Persistent homology studies the evolution of a space, capturing its topological features at different scales. This section briefly introduces the fundamental concepts of persistent homology, and see \ref{sec:persistent homology} for more information. For readers unfamiliar with persistent homology, we recommend \cite{computationaltopology} for a comprehensive treatment.

Let \(f: X \rightarrow \sR\) be a continuous function from a topological space \(X\). The \textit{c-sublevel set} of \(f\), defined as \( X_{c} = \{x \in X : f(x) \leq c\} \) for \( c \in \mathbb{R} \), is an essential object for the understanding the topology of \( X \). These sublevel sets \((X_{c})_{c \in \sR}\) form an increasing sequence of spaces, known as a \textit{sublevel set filtration} as described in Figure~\ref{fig:filtration}. In particular, sublevel sets are central in \textit{Morse theory}, which analyzes the topology of \(X\) by studying the function \(f\).

A sequence of homology groups \(\{ H_{k}(X_{c}) \}_{c \in \sR}\) build upon the filtration \((X_{c})_{c \in sR}\) is called a \smash{\textit{k-dimensional persistent homology}.} Note that there is a natural map \(H_{k}(X_{c}) \rightarrow H_{k}(X_{c'})\) induced by the inclusion \(X_{c} \hookrightarrow X_{c'}\) for \(c \leq c'\). The evolution of topological features through the maps \(H_{k}(X_{c}) \rightarrow H_{k}(X_{c'})\) is encoded in the \textit{persistence diagram}, which is a multiset of points in the extended plane \(\sR \times (\sR \cup \{\infty\})\). If a topological feature appears at \(X_{b}\) and disappears at \(X_{d} \) for \(b < d\), the point \( (b,d) \) is included in the persistence diagram. If a topological feature appears at \(X_{b} \) and persists indefinitely, the point \( (b,\infty) \) is added. See the supplementary material in \ref{sec:persistent homology} for more details.

However, standard persistence diagrams provide only limited topological information, as they focus exclusively on the sublevel sets. To address this, \textit{extended persistence} was introduced in \cite{extendedpersistence}, incorporating additional information by utilizing the \textit{c-superlevel set} \( X^{c} = \{x \in X : f(x) \geq c\} \). An extended persistence diagram \(E\) consists of four components: \(\text{Ord}_{0}\), \(\text{Rel}_{1}\), \(\text{Ext}_{0}^{+}\), and \(\text{Ext}_{1}^{-}\), capturing more topological information than ordinary persistence. As shown in Figure \ref{fig:extendedpersistence}, the extended persistence diagram does not contain points at infinity, simplifying its use in machine learning models as well.

\subsection{Wasserstein Distance}
The collection of (ordinary) persistence diagrams forms a metric space with the \textit{Wasserstein distance}, which provides a measure of dissimilarity between two persistence diagrams. The Wasserstein distance is defined to be the infimum of the cost between all possible matchings \(\gamma\) between two persistence diagrams.

\begin{definition}[Wasserstein Distance]
   Given a persistence diagram \(D\), let \(\text{aug}(D)\) be the union of \(D\) and all points in the diagonal \(\Delta = \{(x,x):x \in \sR \}\) with infinite multiplicity. For two persistence diagrams \(D\) and \(D'\), the \((p,q)\)-Wasserstein distance between \(D\) and \(D'\) is defined as
   \[
   W_{p,q}(D, D^{'}) = \inf_{\gamma} \left( \sum_{u \in \text{aug}(D)} \lVert u - \gamma(u) \rVert_{q}^{p} \right)^{1/p},
   \]
   where \(\gamma: \text{aug}(D) \rightarrow \text{aug}(D^{'})\) ranges over all bijections.
\end{definition}

Intuitively, the Wasserstein distance measures how much `work' is required to match the points in one diagram to those in another, accounting for both the distance between points and the number of points involved. Throughout this paper, we will use the \((1,2)\)-Wasserstein distance \(W=W_{1,2}\).

\subsection{Heat Kernel Signature} \label{sec:hks}
The \textit{Heat Kernel Signature (HKS)} is a feature descriptor that encodes the intrinsic geometric properties of a shape. Originally defined for Riemannian manifolds~\citep{hks}, the discrete version of HKS can also be applied to graphs, allowing us to analyze their structural characteristics. We follow the approach in \cite{perslay}, using HKS values to generate extended persistence diagrams.

To define HKS, we first introduce the \textit{graph Laplacian}, a fundamental tool in graph analysis.

\begin{definition}[Graph Laplacian]
   Let \(G = (V,E)\) be an undirected graph with \(n\) vertices. The \textit{adjacency matrix} \(A\) of \(G\) is the \(n \times n\) matrix defined as
   \[
   A_{ij} = \begin{cases}
      1 & \text{if } (i,j) \in E, \\
      0 & \text{otherwise}.
   \end{cases}
   \]
   The \textit{normalized Laplacian matrix} of \(G\) is given by
   \[
   L = I - D^{-1/2} A D^{-1/2},
   \]
   where \(I\) is the identity matrix, and \(D\) is the diagonal matrix with \(D_{ii} = \sum_{j=1}^{n} A_{ij}\).
\end{definition}

\begin{definition}[Heat Kernel Signature]
  Given a graph \(G\) with a diffusion parameter \(t > 0\), the Heat Kernel Signature is the function \(H_{t} : V \rightarrow \mathbb{R}\) defined at each node \(v \in V\) by
  \[
  H_{t}(v) = \sum_{i=1}^{n} \exp(-\lambda_{i} t) \langle \phi_{i}, v \rangle^{2},
  \]
  where \(\lambda_{i}\) are the eigenvalues and \(\langle \phi_{i}, v \rangle\) is the value of the \(i\)-th eigenvector at node \(v\).
\end{definition}

For a fixed diffusion parameter \(t > 0\), \(H_t\) assigns a real number to each node of the graph, encoding the intrinsic geometric properties of the graph. The diffusion parameter \(t\) controls the scale at which the graph's geometric features are captured, with smaller values of \(t\) focusing on local structures and larger values capturing global properties. By assigning a real number to each node, \(H_t\) encodes essential structural information of the graph.

\section{Pixelized Persistence Diagram}

In this section, we introduce the \textit{Pixelized Persistence Diagram (PPD)}, an efficient representation of persistence diagrams designed for transformer inputs. The PPD is constructed by first applying instance normalization to handle varying scales and then projecting the normalized diagram onto a discrete grid. This approach ensures that diagrams of different scales are represented consistently, facilitating their use in machine learning models while retaining the stability properties of the original diagrams. The process is straightforward and can be implemented in just a few lines of code.

The PPD shares similarities with the Persistence Image (PI) but offers specific advantages when integrated with transformer architectures: (1) PPD requires less or no hyperparameter tuning, and (2) more importantly, PPD contains many zero-value pixels, allowing only non-zero pixels to be utilized. This sparsity significantly reduces the number of tokens, improving computational efficiency and scalability.

\subsection{Projection of Persistence Diagrams}
Given a rotated persistence diagram \( D_r \), we aim to project it onto a discrete grid to create a pixelized representation. For a fixed grid size \(\delta > 0\), we discretize the birth-persistence plane into grid cells:
\[
I_{k,l} = [k \delta, (k + 1) \delta) \times [l \delta, (l + 1) \delta), \quad k, l \in \mathbb{N}.
\]
Each point \( (b, p) \in D_r \) is associated with the grid cell \( I_{k,l} \) containing it.

\begin{definition}[Projection of Persistence Diagram]
The projection map \( \Pi_\delta : \sR^{2}_{\geq 0} \rightarrow \sR^{2}_{\geq 0}  \) is defined by mapping each point \( (b, p) \in D_r \) to the center of the grid cell containing it:
\[
\Pi_\delta(b, p) = \left( \left( k + \tfrac{1}{2} \right) \delta, \left( l + \tfrac{1}{2} \right) \delta \right), \quad \text{where } (b, p) \in I_{k,l}.
\]
The \textit{projected persistence diagram} is then \[ \Pi_\delta(D_r) = \{ \Pi_\delta(b, p) : (b, p) \in D_r \}. \]
\end{definition}

\begin{figure}[t]
  \centering
  \includegraphics[scale=0.4]{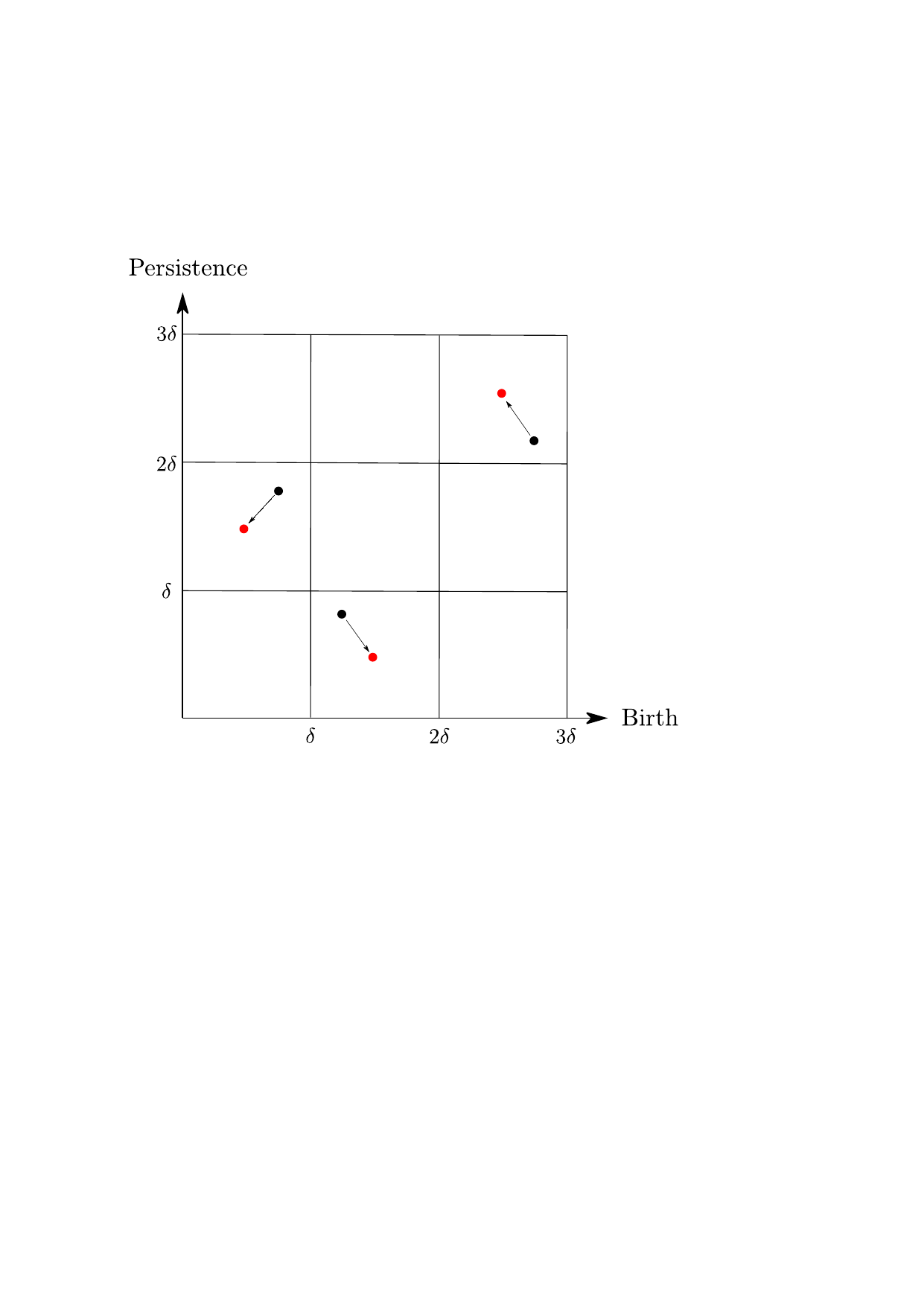}
  \caption{\textbf{Projection of Persistence Diagram.} Each point in \( D_r \) is projected onto the center of its corresponding grid cell using the projection map \( \Pi_\delta \). }
  \end{figure}
  
  \paragraph{Stability of the Projected Persistence Diagram}
  
  The projection operation \( \Pi_\delta \) applied to any rotated persistence diagram preserves stability with respect to the Wasserstein distance.
  
  \begin{proposition}\label{prop:stability}
  Let \( D \) and \( D' \) be two persistence diagrams, and let \( \delta < W(D, D') \). Then we have:
  \begin{align*}
    W\Big( \Pi_\delta(D_r), & \Pi_\delta(D'_r) \Big) \\
    &\leq \left( \frac{\sqrt{|D|}+\sqrt{|D'|}}{\sqrt{2}} + \sqrt{3} \right) W(D, D').
  \end{align*}
  
  \end{proposition}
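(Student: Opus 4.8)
The plan is to bound the left-hand side with the triangle inequality for the Wasserstein distance, writing
\[
W\big(\Pi_\delta(D_r),\Pi_\delta(D'_r)\big)\le W\big(\Pi_\delta(D_r),D_r\big)+W(D_r,D'_r)+W\big(D'_r,\Pi_\delta(D'_r)\big),
\]
and then controlling the three pieces separately. The middle term only measures the cost of the rotation: the map sending a persistence diagram to its birth--persistence form is the linear map $M(b,d)=(b,d-b)$, and since $\|M x\|_2^2=x_1^2+(x_2-x_1)^2\le 3x_1^2+2x_2^2\le 3\|x\|_2^2$, it is $\sqrt3$-Lipschitz; moreover it carries the diagonal to the horizontal axis, so a $W(D,D')$-optimal matching, conjugated by the rotation, is a valid matching for $W(D_r,D'_r)$ and yields $W(D_r,D'_r)\le \sqrt3\,W(D,D')$. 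This is the source of the $\sqrt3$ term.

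For the two outer ("discretization") terms, the idea is to match every point $u\in D_r$ to the center $\Pi_\delta(u)$ of the grid cell containing it. Since a grid cell is a $\delta\times\delta$ square, the displacement satisfies $\|u-\Pi_\delta(u)\|_2< \delta/\sqrt2$ (half the diagonal of the cell). The delicate point — and the step I expect to be the main obstacle — is that summing this bound naively over all points of $D_r$ only gives $|D|\,\delta/\sqrt2$, whereas the statement asks for $\sqrt{|D|}\,\delta/\sqrt2$. To get the sharper estimate one has to exploit that $\Pi_\delta(D_r)$ and $\Pi_\delta(D'_r)$ are discretized on the \emph{same} grid: when a matched pair lands in a common cell its two projection errors cancel exactly, so only "boundary-crossing" pairs and points of small persistence (which are lifted to height $\delta/2$) actually contribute, and their total contribution must then be repackaged — e.g. by a Cauchy--Schwarz estimate against the number of occupied cells — to produce a $\sqrt{|D|}+\sqrt{|D'|}$ factor instead of $|D|+|D'|$.

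Finally I would assemble the pieces: combining the three bounds gives
\[
W\big(\Pi_\delta(D_r),\Pi_\delta(D'_r)\big)\le \frac{\sqrt{|D|}+\sqrt{|D'|}}{\sqrt2}\,\delta+\sqrt3\,W(D,D'),
\]
and the hypothesis $\delta<W(D,D')$ enters exactly here, to convert the additive $\delta$-scale discretization error into a multiple of $W(D,D')$, which yields the claimed inequality. I expect the rotation bound and the triangle inequality to be routine, and the bookkeeping around low-persistence points together with pinning down the precise constant in the discretization term to be the only genuinely technical part.
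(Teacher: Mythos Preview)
Your overall architecture --- triangle inequality, then a $\sqrt{3}$ bound for the rotation, then discretization bounds for the two outer terms, then invoking $\delta<W(D,D')$ to absorb the $\delta$ --- is exactly the paper's route, and your derivation of the $\sqrt{3}$ factor matches its Lemma~2.

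The gap is in the discretization term. You correctly note that each point moves at most $\delta/\sqrt{2}$ under $\Pi_\delta$, then say the naive sum gives $|D|\,\delta/\sqrt{2}$ and propose a cancellation-across-cells argument to recover $\sqrt{|D|}$. That proposed fix does not work as written: the quantity $W(\Pi_\delta(D_r),D_r)$ involves only the single diagram $D$, so there are no ``matched pairs landing in a common cell'' to cancel; and even if you bypass the triangle inequality and build a single matching from $\Pi_\delta(D_r)$ to $\Pi_\delta(D'_r)$, the number of boundary-crossing pairs can be of order $|D|$, not $\sqrt{|D|}$, so the Cauchy--Schwarz repackaging you allude to has nothing to act on.

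The actual mechanism is purely a matter of which norm is being used. Despite the paper's text declaring $W=W_{1,2}$, its Lemmas~1 and~2 are computed with the $(2,2)$-Wasserstein cost $\bigl(\sum_u\|u-\gamma(u)\|_2^2\bigr)^{1/2}$. With the $\ell^2$ aggregation the per-point bound $\|u-\Pi_\delta(u)\|_2^2\le \delta^2/2$ sums to $|D|\,\delta^2/2$ and the outer square root gives $\sqrt{|D|/2}\,\delta$ immediately --- that is the entire content of Lemma~1. No cancellation, no low-persistence bookkeeping, no Cauchy--Schwarz. Once you compute in the $p=2$ norm, the step you flagged as ``the only genuinely technical part'' evaporates, and the three pieces assemble exactly as in your final display.
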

  \begin{proof}
    See Appendix~\ref{proof:stability}.
    \end{proof}

  \subsection{Pixelized Persistence Diagram}

  The projected persistence diagram \( \Pi_\delta(D_r) \) is a digital-image-like representation, where each point corresponds to the center of a grid cell. However, the scale of \( D_r \) can vary between different persistence diagrams, leading to inconsistencies in the representation. To address this, we apply instance normalization before projecting the diagram.
  
  We define the \textit{instance-nomalized diagram} as
  \[
  \texttt{Norm}(D_r) = \left\{ \left( \frac{b}{b_{\text{max}}}, \frac{p}{p_{\text{max}}} \right) : (b, p) \in D_r \right\},
  \]
  where \(b_{\text{max}} = \max \{ b : (b, p) \in D_r \}\), \(p_{\text{max}} = \max \{ p : (b, p) \in D_r \}\) are the maximum values of birth, and persistence in \( D_r \), respectively. This maps the rotated diagram into the unit square.
  
  \begin{remark}
  If \( b_{\text{max}} = 0 \) (e.g., in 0-dimensional diagrams where all birth times are zero), we set \( b_{\text{max}} = 1 \) to avoid division by zero. Since all \( b \) values are zero, this normalization leaves them unchanged.
  \end{remark}
  \begin{figure*}[h]
    \centering
    \includegraphics[width=.9\textwidth]{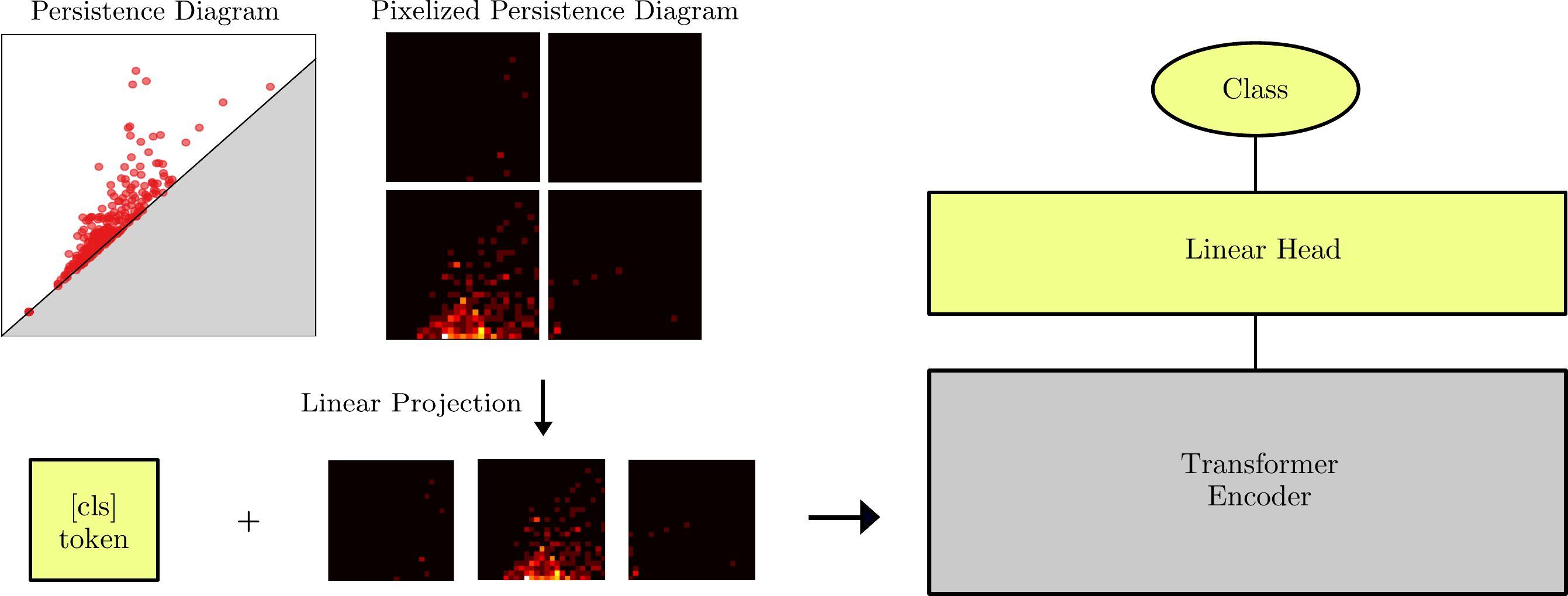}
    \caption{\textbf{xPerT Overview.} The persistence diagram is pixelized and split into fixed-size patches (left, top). These patches are linearly transformed into token vectors, with a \texttt{[cls]} token added (left, bottom). Empty patches are excluded from the transformer input.(Right) The token vectors are processed by the transformer model, and the output of the \texttt{[cls]} token is fed to the linear head for classification.}
    \label{fig:xpert}
  \end{figure*}
  
  The \textit{Pixelized Persistence Diagram (PPD)} is then the pixelized representation of \(\Pi_{\delta} \big(\texttt{Norm}(D_r)\big)\), with \(\delta = 1/H\). Concretely,  let \( H > 0 \) be a positive integer representing the grid resolution. We partition the unit square \( [0, 1] \times [0, 1] \) into \( H \times H \) grid cells:
  \[
  I_{i,j} = \left[ \frac{i - 1}{H}, \frac{i}{H} \right) \times \left[ \frac{j - 1}{H}, \frac{j}{H} \right), \quad i, j = 1, 2, \dots, H.
  \]

  \begin{definition}[Pixelized Persistence Diagram]
    The \textbf{Pixelized Persistence Diagram (PPD)} \smash{\( \ppd_H(D) \in \mathbb{N}^{H \times H} \)} is an integer matrix where each entry \( \ppd_H(D)_{i,j} \) represents the number of points from \( \texttt{Norm}(D_r) \) that fall into the \( (i,j) \)-th grid cell:
    \[
    \ppd_H(D)_{i,j} = \left| \left\{ (b, p) \in \texttt{Norm}(D_r) : \left( b, p \right) \in I_{i,j} \right\} \right|.
    \]
    \end{definition}
    
    \subsection{Extended Pixelized Persistence Diagram} \label{sec:ExtendedPPD}
    Having defined the PPD for a single diagram, we now extend the definition to the extended persistence diagram. The extended pixelized persistence diagram \(\ppd_{H}(E)\) of an extended persistence diagram \(E = \{\text{Ord}_{0}, \text{Rel}_{1}, \text{Ext}_{0}^{+}, \text{Ext}_{1}^{-} \}\) is obtained discretizing each diagram:
    
    \begin{enumerate}   
       \item Transpose the diagrams \(\text{Rel}_{1}\) and \(\text{Ext}_{1}^{-}\) so that all four diagrams are positioned in the upper half-plane, above the diagonal. 
       
       \item Rotate diagrams \(\{\text{Ord}_{0}, (\text{Rel}_{1})^{T}, \text{Ext}_{0}^{+}, (\text{Ext}_{1}^{-})^{T} \}\): \[\{R_{\text{ord}}, R_{\text{rel}}, {R_{\text{ext}^{+}}}, R_{\text{ext}^{-}}\}.\]
       
       \item Compute the PPD for each rotated diagram. The Extended Pixelized Persistence Diagram \(\ppd_{H}(E)\) is the set:
       \[
        \{ \ppd_{H}(R_{\text{ord}}), \ppd_{H}(R_{\text{rel}}), \ppd_{H}(R_{\text{ext}^{+}}), \ppd_{H}(R_{\text{ext}^{-}}) \}\footnote{   When computing the PPDs for the extended persistence diagram, the same \(b_{\text{max}}\) and \(p_{\text{max}}\) values are applied to all four diagrams to ensure consistency.}.
       \]
    \end{enumerate}


\section{xPerT}
We now describe the Extended Persistence Transformer (xPerT), illustrated in Figure \ref{fig:xpert}. The xPerT model processes a set of persistence diagrams by transforming them into sequences of tokens, which are then fed into a transformer model.  

\subsection{Tokenization} \label{sec:Tokenization}
To feed persistence diagrams into a transformer, they must first be converted into sequences of tokens. Here, we describe the tokenization process for extended persistence diagrams. Due to the unique nature of 0-dimensional persistence diagrams, where points typically lie along the \(y\)-axis, a slightly different tokenization method is used, as detailed in Section \ref{sec:Tokenization OPD} of the supplementary material.

Given an extended persistence diagram \(E\), we first discretize it into an extended PPD \(\ppd_{H}(E)\) (section \ref{sec:ExtendedPPD}), where \(H > 0\) is the resolution. Let \(\ppd \in \sN^{4 \times H \times H}\) be the tensor obtained by stacking individual PPD in \(\ppd_{H}(E)\) along the channel dimension. This multi-channel representation is then divided into \(N = (H/P)^2\) patches, where each patch is of size \(4 \times P \times P\), and \(P\) is the patch size. Each patch is then flattened into a vector in \(\sN^{4 P^2}\), resulting in the sequence \(\{\ppd_{\text{patch}}^{1}, \dots, \ppd_{\text{patch}}^{N}\}\).

Finally, we apply a linear transformation to each patch to obtain token embeddings:
\[
\text{Tokens} = \left\{\mathbf{E}\ppd_{\text{patch}}^1, \dots, \mathbf{E}\ppd_{\text{patch}}^N\right\}, \quad \mathbf{E}\ppd_{\text{patch}}^i \in \mathbb{R}^D,
\]
where \(\mathbf{E} \in \mathbb{R}^{D \times (4 P^2)}\) is a learnable projection matrix, and \(D\) is the embedding dimension. Note that this tokenization approach closely parallels the process used in the Vision Transformer \cite{vit}, where images are similarly divided into patches and transformed into token embeddings.

After tokenization, we prepend a classification \texttt{[cls]} token to the sequence, which aggregates information for downstream tasks. To retain spatial relationships, we add standard 2D sinusoidal positional encodings to the token embeddings.

\paragraph{Sparsity of Patches.}
One of the key advantages of xPerT is its ability to leverage the inherent sparsity in persistence diagrams. Points in persistence diagrams are often distributed non-uniformly, resulting in many patches being zero vectors. The xPerT model takes advantage of this sparsity by processing only the non-zero patches, which significantly reduces the number of tokens and computational costs.

\paragraph{Classification Head.}
We apply a single-layer linear head to the \texttt{[cls]} token, followed by a softmax layer to output the probability distribution over the classes.

\section{Experiments}

In this section, we present the experimental results of the xPerT model on graph and dynamical system classification tasks. We compare xPerT's performance with state-of-the-art methods related to persistent homology, including both single-parameter and multi-parameter approaches.

\paragraph{Architecture.}
We use the same xPerT architecture for all experiments to demonstrate that it performs well without extensive hyperparameter tuning.\footnote{Further hyperparameter optimization could improve performance, as shown in the ablation study.} The transformer model used in the experiments consists of 5 layers with 8 attention heads, with a token dimension set to 192. The resolution of the pixelized persistence diagram (PPD) is \(H=50\), with a patch size of \(P=5\), resulting in at most 100 patches per (extended) persistence diagram. However, due to the sparsity of the diagrams, the actual number of patches is often much smaller. For instance, the average number of non-zero patches in the \texttt{ORBIT5K} dataset is 8.4 for the 0-dimensional diagrams and 23.2 for the 1-dimensional diagrams.

\begin{table*}[]
  \centering
  \caption{\textbf{Graph classification.} The average classification accuracy across different datasets over 10-fold cross-validation. (Top) Methods leveraging single or multi-parameter persistent homology. (Bottom) Methods that combine persistent homology with Graph Isomorphism Network (GIN) models. The best results are highlighted in bold, and the second-best are underlined.}
  \resizebox{\textwidth}{!}{
     \begin{tabular}{lcccccc}\\
        \textbf{Method}  & \textbf{\texttt{IMDB-B}}          & \textbf{\texttt{IMDB-M}}              & \textbf{\texttt{MUTAG}}            & \textbf{\texttt{PROTEINS}}          & \textbf{\texttt{COX2}}             & \textbf{\texttt{DHFR}}   \\ \hline
        PersLay\(^{\dagger}\)  
                         & 71.2                     & 48.8                         & \underline{89.8}          & \underline{74.8}          & \underline{80.9}          & 80.3             \\
        ATOL             & 69.2 \(\pm\) 4.1         & 42.9 \(\pm\) 2.3             & 88.3 \(\pm\) 3.9          & 72.6 \(\pm\) 1.9          & 80.0 \(\pm\) 7.6          & 81.2 \(\pm\) 4.8 \\

        HSM-MP-SW\(^{\dagger}\)
                         &\textbf{74.7 \(\pm\) 5.0} & \underline{50.3 \(\pm\) 3.5} & 86.8 \(\pm\) 7.1          & 74.1 \(\pm\) 2.0          & 77.9 \(\pm\) 1.3          & \textbf{82.8 \(\pm\) 5.0} \\

        GRIL\(^{\dagger}\)
                       & 65.2 \(\pm\) 2.6           &     \(-\)\tablefootnote{GRIL was not evaluated on the \texttt{IMDB-MULTI} dataset.}                    & 87.8 \(\pm\) 4.2          & 70.9 \(\pm\) 3.1          & 79.8 \(\pm\) 2.9          & 77.6 \(\pm\) 2.5 \\

        Persformer       & 68.9 \(\pm\) 8.8         & \textbf{51.7 \(\pm\) 3.3}    & 89.4 \(\pm\) 4.0          & 72.0 \(\pm\) 6.7          & 78.2 \(\pm\)0.8 \tablefootnote{The model consistently predicts the majority class, leading to inflated accuracy due to class imbalance.}                     & 64.8 \(\pm\) 2.3 \\                         

        xPerT          &\underline{72.6 \(\pm\) 3.4}& 50.0 \(\pm\) 2.0             & \textbf{91.0 \(\pm\) 5.2} & \textbf{75.7 \(\pm\) 3.8} & \textbf{84.4 \(\pm\) 3.5} & \underline{81.9 \(\pm\) 2.9} \\ 
        \\[-1.6ex] \hline \\[-1.6ex]

        GIN            &\underline{75.3 \(\pm\) 4.8}& \textbf{52.3 \(\pm\) 3.6}   &\underline{94.1 \(\pm\) 3.8}& 76.6 \(\pm\) 3.0          & 85.4 \(\pm\) 3.4          & \textbf{84.5 \(\pm\) 3.6} \\  
        { }+ GRIL(sum)\(^{\dagger}\) 
                       & 74.2 \(\pm\) 2.8           &     \(-\)\footnote[3]                   & 89.3 \(\pm\) 4.8          & 71.9 \(\pm\) 3.2          & 79.2 \(\pm\) 4.9          & 78.5 \(\pm\) 5.8 \\
        { }+ xPerT(sum)& \textbf{76.1 \(\pm\) 2.7}  & 51.1 \(\pm\) 1.9             & 93.7 \(\pm\) 3.9         &\underline{77.6 \(\pm\) 4.7}& \textbf{85.9 \(\pm\) 2.9} & \underline{83.5 \(\pm\) 3.4} \\
        { }+ xPerT(cat)& 75.7 \(\pm\) 2.3           & \underline{52.1 \(\pm\) 2.0} & \textbf{94.7 \(\pm\) 5.3} & \textbf{78.4 \(\pm\) 5.1}&\underline{85.7 \(\pm\) 3.9}& 81.9 \(\pm\) 2.6 \\
        \end{tabular}
  }
  \label{tab:graph} 
\end{table*}

\subsection{Classification on Graph Datasets} \label{sec:graph classification}
Given a graph, we compute the heat kernel signature (HKS) on the graph with diffusion parameter \(t=1.0\), which is used to generate the extended persistence diagram. For detailed hyperparameters, see the supplementary material in \ref{sec:hyperparameters}.

\paragraph{Graph Datasets.} We evaluate xPerT on several widely used graph classification datasets: \texttt{IMDB-BINARY} , \texttt{IMDB-MULTI}, \texttt{MUTAG}, \texttt{PROTEINS}, \texttt{COX2}, and \texttt{DHFR}. The \texttt{IMDB} datasets consist of social network graphs, while the remaining datasets are derived from biological and medical domains (see \cite{tudataset} for details).


\paragraph{Baselines and Results.} 
Table \ref{tab:graph} summarizes the performance of xPerT compared with state-of-the-art methods related to persistent homology, as described in Section~\ref{sec:related works}. We compare xPerT with PersLay \citep{perslay}, ATOL \citep{atol}, and Persformer \citep{persformer}, which, as discussed earlier, use extended persistence diagrams generated from the HKS function. Additionally, we include comparisons with GRIL \citep{gril} and HSM-MP-SW \citep{multiparameter}, which employ multi-parameter persistent homology without relying on persistence diagrams. This allows us to evaluate xPerT's performance against both single-parameter and multi-parameter persistent homology approaches. Detailed settings for each model are provided in Table~\ref{tab:baselines} in the supplementary material.

In summary, xPerT demonstrates strong and consistent performance across various datasets, often surpassing or matching state-of-the-art methods. Furthermore, xPerT is flexible and can be seamlessly combined with models like GIN, making it adaptable for use with other deep learning architectures.

\subsection{Dynamical System Dataset.}
Table \ref{exp:orbit} shows the average classification accuracy of xPerT on the dynamical system datasets over 5 independent runs. We evaluate xPerT on two commonly used datasets in topological data analysis: \texttt{ORBIT5K} and \texttt{ORBIT100K}. These datasets consist of simulated orbits with distinct topological characteristics, generated using the recursive equations:
\[
\begin{aligned}
    x_{n+1} &= x_n + ry_n(1 - y_n) &\mod 1 \\
    y_{n+1} &= y_n + rx_{n+1}(1 - x_{n+1}) &\mod 1
\end{aligned}
\]
Each orbit is initialized with a random initial point \((x_0, y_0) \in [0, 1]^2\) and a parameter \smash{\(r \in \{2.5, 3.5, 4.0, 4.1, 4.3\}\)}. The task is to predict the parameter \(r\) that generated each orbit, given its 0-dimensional and 1-dimensional persistence diagrams generated by the \textit{weak alpha filtration} (Section~\ref{sec:Tokenization OPD}). 

The \texttt{ORBIT5K} dataset contains 5,000 point clouds, each with 1,000 points per orbit, while the larger \texttt{ORBIT100K} dataset consists of 100,000 point clouds, with 20,000 orbits for each \(r\) value. Both datasets are split into 70\% training and 30\% testing sets.


\begin{table}

  \centering
  \caption{\textbf{Orbit Classification.} Mean classification accuracy over 5 independent runs for the \texttt{ORBIT5K} and \texttt{ORBIT100K} datasets. Results marked with \(\dagger\) are taken from the original papers. \\} 
     \begin{tabular}{lcc}
        \textbf{Method} & \textbf{\texttt{ORBIT5K}} & \textbf{\texttt{ORBIT100K}} \\ \hline
        PersLay\(^{\dagger}\) & \underline{87.7 \(\pm\) 1.0}   & 89.2 \(\pm\) 0.3 \\
        ATOL         & 72.2 \(\pm\) 1.5 & 68.8 \(\pm\) 8.0 \\
        Persformer\(^{\dagger}\) & \textbf{91.2 \(\pm\) 0.8} & \textbf{92.0 \(\pm\) 0.4} \\
        Persformer\tablefootnote{Reproduced result. The model was not trainable in both \texttt{ORBIT5K} and \texttt{ORBIT100K}. We used the code from \href{https://github.com/giotto-ai/giotto-deep}{https://github.com/giotto-ai/giotto-deep}.}& 28.2 \(\pm\) 7.3 & \(-\) \\
        xPerT  & 87.0 \(\pm\) 0.7 & \underline{91.1 \(\pm\) 0.1} \\
        \end{tabular} \label{exp:orbit}

\end{table}
\begin{table*}[t]
  \centering
  \caption{\textbf{Ablating patch size (Graph).} Average accuracies over 10-fold cross-validation. The effect of patch size is less clear in graph datasets.}
  \resizebox{\textwidth}{!}{
     \begin{tabular}{lcccccc}\\
        \textbf{Patch Size}& \textbf{\texttt{IMDB-B}} & \textbf{\texttt{IMDB-M}} & \textbf{\texttt{MUTAG}}  & \textbf{\texttt{PROTEINS}}& \textbf{\texttt{COX2}}   & \textbf{\texttt{DHFR}}\\ \hline
        \(P = 2\)          & 71.9 \(\pm\) 3.8& 48.1 \(\pm\) 4.0& 89.4 \(\pm\) 5.7& \textbf{75.7 \(\pm\) 3.0}& 82.6 \(\pm\) 4.6& 80.7 \(\pm\) 2.4\\
        \(P = 5\)          & 72.6 \(\pm\) 3.4& \textbf{50.0 \(\pm\) 2.0}& \textbf{90.0 \(\pm\) 5.2}& \textbf{75.7 \(\pm\) 3.8}&\textbf{84.4 \(\pm\) 3.5}& \textbf{81.9 \(\pm\) 2.9}\\
        \(P = 10\)         & \textbf{74.5 \(\pm\) 4.1}& 49.2 \(\pm\) 3.3& 89.9 \(\pm\) 6.1& 75.5 \(\pm\) 3.4& 82.7 \(\pm\) 3.6& 81.6 \(\pm\) 4.2
        \end{tabular}\label{abl:patchsizegraph}
  }
\end{table*}
\section{Ablation Study}
\subsection{Patch Size.}

Tables~\ref{abl:patchsizeorbit} and~\ref{abl:patchsizegraph} show the effect of patch size on the dynamical system and graph datasets, respectively. Reducing the patch size increases the number of patches, providing a finer resolution for the pixelized persistence diagrams. We observe that the ORBIT5K dataset benefits significantly from smaller patch sizes, as they allow the model to extract more detailed topological information. However, the effect of decreasing the patch size is less pronounced on the ORBIT100K dataset, likely due to the increased scale of the dataset. 

In contrast, the effect of patch size on the graph datasets is less evident. This may be due to the fact that persistence diagrams generated from graph datasets tend to have much fewer points compared to those from the dynamical system datasets. As a result, reducing the patch size may not provide significant additional information.

\begin{table}[t]
  \centering
  \caption{\textbf{Ablating patch size (Orbit).} Impact of varying patch sizes on classification performance for the \texttt{ORBIT5K} and \texttt{ORBIT100K} datasets. The results show that xPerT benefits from smaller patch sizes, with mean accuracy reported over 5 independent runs. \\} 
     \begin{tabular}{lcc}
        \textbf{Patch Size} & \textbf{\texttt{ORBIT5K}}            & \textbf{\texttt{ORBIT100K}} \\ \hline
        \(P = 2\)           & \textbf{88.1 \(\pm\) 0.4}   & \textbf{90.8 \(\pm\) 0.1} \\
        \(P = 5\)           & 87.0 \(\pm\) 0.7            & 90.2 \(\pm\) 0.1 \\
        \(P = 10\)          & 80.5 \(\pm\) 1.2            & 89.8 \(\pm\) 0.3 \\
        \end{tabular}\label{abl:patchsizeorbit}
\end{table}


              

\begin{table}[ht]
  \centering
  \caption{\textbf{Ablating Model Size.} Performance results for varying model depth and width across three datasets: \texttt{PROTEINS}, \texttt{COX2}, and \texttt{ORBIT5K}.}
  \label{abl:model_size}
  
  \begin{tabular}{lccc}
    \\
    \toprule
    \textbf{Depth} & \textbf{\texttt{PROTEINS}}  & \textbf{\texttt{COX2}} & \textbf{\texttt{ORBIT5K}} \\ 
    \midrule
    2  & 75.0 \(\pm\) 1.9  & 83.1 \(\pm\) 3.4  & 86.2 \(\pm\) 0.9 \\
    5  & \textbf{75.7 \(\pm\) 2.7}  & 83.3 \(\pm\) 4.3  & \textbf{87.0 \(\pm\) 0.7} \\
    8  & 75.1 \(\pm\) 4.3  & \textbf{83.9 \(\pm\) 4.1}  & 86.3 \(\pm\) 0.9 \\
    \midrule
    \textbf{Width} & \textbf{\texttt{PROTEINS}} & \textbf{\texttt{COX2}} & \textbf{\texttt{ORBIT5K}} \\ 
    \midrule
    96   & 75.5 \(\pm\) 4.9  & 83.1 \(\pm\) 5.6  & 86.4 \(\pm\) 1.0 \\
    192  & \textbf{75.7 \(\pm\) 2.7}  & \textbf{83.3 \(\pm\) 4.3}  & \textbf{87.0 \(\pm\) 0.7} \\
    384  & 75.5 \(\pm\) 2.1  & 82.9 \(\pm\) 3.0  & 86.6 \(\pm\) 1.0 \\
    \bottomrule
  \end{tabular}
\end{table}

\subsection{Model Size.}
Table~\ref{abl:model_size} show the impact of model depth and width on classification performance across the \texttt{ORBIT5K} and graph datasets. The results demonstrate that xPerT performs consistently well across various configurations of depth and width, indicating its robustness in different situations. Regardless of the specific dataset or parameter setting, the model maintains competitive accuracy, suggesting that xPerT can generalize effectively across different tasks and dataset characteristics.  

\section{Conclusion and Limitations}
In this work, we introduced xPerT, a novel transformer architecture specifically designed for persistence diagrams, enabling efficient handling of topological information in data. xPerT's design allows for easy integration with other machine learning models while efficiently handling the sparse nature of persistence diagrams, significantly reducing computational complexity compared to previous transformer models for persistence diagrams.

We demonstrated the effectiveness of xPerT on both graph classification and dynamical system classification tasks, where it outperforms other methods that use (extended) persistence diagrams as machine learning features in several datasets. Furthermore, we conducted an ablation study to investigate the effects of patch size and model size on performance. Our results indicate that xPerT performs robustly across a wide range of hyperparameters and is resilient to changes in model size and grid resolution.

While xPerT shows a promise for topological data analysis, a key limitation of our study is that the model was tested on a limited number of datasets. In future work, we plan to evaluate xPerT on a more diverse set of datasets from different domains to better understand its generalizability and scalability. 

In conclusion, xPerT offers a promising approach to utilizing topological data for machine learning tasks. Its ease of integration with other models, combined with its robust performance, makes xPerT a valuable tool for a wide range of applications.

\newpage
\bibliography{sample_paper}

\newpage

\onecolumn
\appendix
\section{Proofs}

\subsection{Proof of Proposition~\ref{prop:stability}} \label{proof:stability}
We first state and prove two lemmas that will be used in the proof of Proposition~\ref{prop:stability}.
\begin{lemma}\label{lemma:projection}
Let \(D_r\) be a rotated persistence diagram. Then
\[
W(\Pi_{\delta}(D_r), D_r) \leq \sqrt{\frac{|D|}{2}} \delta.
\]
\end{lemma}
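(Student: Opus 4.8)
The plan is to upper-bound $W(\Pi_\delta(D_r), D_r)$ by exhibiting one explicit, natural bijection and evaluating its matching cost, rather than optimizing over all bijections. Let $\gamma : \mathrm{aug}(D_r) \to \mathrm{aug}(\Pi_\delta(D_r))$ be the map that sends each off-diagonal point $u = (b,p) \in D_r$ to its projection $\Pi_\delta(u)$ and fixes every point of the diagonal $\Delta$. Since $\Pi_\delta$ maps $\sR^2_{\geq 0}$ into $\sR^2_{\geq 0}$ without changing the number of off-diagonal points, and since both augmented diagrams contain each diagonal point with infinite multiplicity, $\gamma$ is a well-defined bijection. Hence $W(\Pi_\delta(D_r), D_r)$ is at most the cost of $\gamma$, and the diagonal part of $\gamma$ contributes nothing to that cost.

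Next I would bound the cost of a single matched pair. If $u = (b,p)$ lies in the grid cell $I_{k,l} = [k\delta, (k+1)\delta) \times [l\delta, (l+1)\delta)$, then by definition of the projection $\Pi_\delta(u) = \bigl( (k+\tfrac12)\delta, (l+\tfrac12)\delta \bigr)$, so each coordinate of $u - \Pi_\delta(u)$ has absolute value at most $\delta/2$, and therefore $\lVert u - \Pi_\delta(u)\rVert_2 \leq \sqrt{(\delta/2)^2 + (\delta/2)^2} = \delta/\sqrt{2}$. This cell-center displacement estimate is the only geometric input; the half-open cells guarantee each point lies in exactly one $I_{k,l}$, so there is no ambiguity.

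Finally I would combine the at most $|D|$ per-pair costs. The number of off-diagonal points of $D_r$ equals $|D|$ (rotation is a bijection), and aggregating their displacements gives $W(\Pi_\delta(D_r), D_r) \leq \bigl(\sum_{u \in D_r} \lVert u - \Pi_\delta(u)\rVert_2^{2}\bigr)^{1/2} \leq \bigl( |D| \cdot \delta^2/2 \bigr)^{1/2} = \sqrt{|D|/2}\,\delta$, which is the claimed bound. The main thing to be careful about is this aggregation step: one must make sure it is carried out consistently with the precise definition of $W$ in use (in particular that the diagonal part of $\gamma$ drops out, and that the $\ell^2$-aggregation of the $|D|$ terms is the one producing the $\sqrt{|D|/2}$ factor rather than a direct sum of the $|D|$ terms, which would only weaken the constant). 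Beyond pinning down the $\delta/\sqrt2$ per-pair constant and this bookkeeping, there is no substantial obstacle.
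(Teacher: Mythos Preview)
Your proposal is correct and follows essentially the same argument as the paper: exhibit the natural matching $u \mapsto \Pi_\delta(u)$ (identity on the diagonal), bound each displacement by $\delta/\sqrt{2}$ using the cell-center estimate, and aggregate the $|D|$ squared terms. The paper's proof is a one-line version of exactly this computation; your version simply makes the bijection on the augmented diagrams and the per-cell bound more explicit, and flags the same aggregation convention (the $\ell^2$-sum over points) that the paper uses implicitly.
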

\begin{proof}
The cost of the matching \(\Pi_{\delta}: D_{r} \rightarrow \Pi_{\delta}(D_r) \) is given by 
\[
\left(\sum_{u \in D_r} \| u - \Pi_{\delta}(u) \|_2^2\right)^{1/2}  \leq \left(\sum_{u \in D_r} \frac{1}{2} \delta^2 \right)^{1/2} = \sqrt{\frac{|D|}{2}} \delta.
\]
\end{proof}

\begin{lemma}\label{lemma:stability}
   Let \( D \) and \( D' \) be two persistence diagrams. The Wasserstein distance between their rotated versions \( D_r \) and \( D'_r \) satisfies the following inequality:
   \[
   W\left( D_r, D'_r \right) \leq \sqrt{3} W(D, D').
   \]
\end{lemma}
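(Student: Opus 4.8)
The plan is to show that the rotation map that sends the birth–death plane to the birth–persistence plane is Lipschitz with constant $\sqrt{3}$ when the target is equipped with the $\ell_2$ distance used in the Wasserstein functional, and then to transport an optimal matching through this map. Recall that the rotation used here takes a point $(b,d)$ to $(b, d-b)$ (the birth–persistence coordinates); equivalently it is the linear map $T(b,d) = (b, d-b)$, represented by the matrix $\begin{pmatrix} 1 & 0 \\ -1 & 1 \end{pmatrix}$. The first step is to bound the operator norm of $T$: for any vector $v = (x,y)$ we have $\|Tv\|_2^2 = x^2 + (y-x)^2 = 2x^2 - 2xy + y^2$, and a short computation (e.g.\ bounding $-2xy \le x^2 + y^2$, or computing the largest eigenvalue of $T^\top T$) gives $\|Tv\|_2^2 \le 3(x^2+y^2) = 3\|v\|_2^2$, hence $\|Tv\|_2 \le \sqrt{3}\,\|v\|_2$. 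One must also check $T$ is a bijection on the relevant domains and maps the diagonal $\Delta$ into the birth axis $\{p = 0\}$, so that augmentation points are handled consistently: a diagonal point $(x,x)$ maps to $(x,0)$, which is exactly the form of the "diagonal" in birth–persistence coordinates, and $T$ preserves the distance from a point to its nearest diagonal point up to the same factor $\sqrt 3$.

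The second step is the standard matching-transport argument. Let $\gamma : \mathrm{aug}(D) \to \mathrm{aug}(D')$ be a bijection, and note $T$ induces a bijection $\mathrm{aug}(D_r) \to \mathrm{aug}(D'_r)$, under which the matching $\gamma$ corresponds to the matching $\gamma_r = T \circ \gamma \circ T^{-1}$ between $\mathrm{aug}(D_r)$ and $\mathrm{aug}(D'_r)$. Then for each $u \in \mathrm{aug}(D)$, writing $u_r = T(u)$,
\[
\| u_r - \gamma_r(u_r) \|_2 = \| T(u) - T(\gamma(u)) \|_2 = \| T(u - \gamma(u)) \|_2 \le \sqrt{3}\, \| u - \gamma(u) \|_2 ,
\]
using linearity of $T$ and the operator-norm bound. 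Summing the $p$-th powers (here $p=1$) over all $u$ gives $\sum_{u_r} \|u_r - \gamma_r(u_r)\|_2 \le \sqrt{3} \sum_u \|u - \gamma(u)\|_2$. Taking the infimum over $\gamma$ on the right, and noting every matching between $\mathrm{aug}(D_r)$ and $\mathrm{aug}(D'_r)$ arises this way from some $\gamma$ (since $T$ is a bijection), we conclude $W(D_r, D'_r) \le \sqrt{3}\, W(D, D')$.

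I expect the only genuinely delicate point to be the bookkeeping around the augmented diagrams and the diagonal: one needs the rotation to respect the convention that matching a point to the diagonal costs its distance to the diagonal, and to confirm that this cost also only inflates by $\sqrt 3$ (which it does, since the nearest-diagonal projection commutes with $T$ up to the same Lipschitz bound, or one can simply observe that diagonal points of $D$ map to "persistence-zero" points and the argument above applies verbatim to them). The Lipschitz-constant computation itself is routine linear algebra, and the matching-transport step is a template argument; the substance is in checking that the geometry of $\mathrm{aug}(\cdot)$ is preserved.
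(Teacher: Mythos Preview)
Your proof is correct and follows essentially the same strategy as the paper: transport an optimal matching through the linear map $T=R$ taking $(b,d)\mapsto(b,d-b)$ and bound each pair's cost via the operator-norm estimate $\|Tv\|_2\le\sqrt{3}\,\|v\|_2$. The only cosmetic difference is that the paper explicitly separates the matching into off-diagonal pairs and diagonal-matched points (obtaining factors $3$ and $2$ on the squared costs, and working with $W^2$ sums), whereas you apply the uniform Lipschitz bound to all pairs of $\mathrm{aug}(D)$ at once; both routes yield the same constant $\sqrt{3}$, and your observation that $T$ carries the diagonal $\Delta$ bijectively onto the persistence-zero axis is exactly what makes the augmented-diagram bookkeeping go through.
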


\begin{proof}
Let \(\tilde{\gamma}\) be the optimal matching between \( D \) and \( D' \) that achieves the Wasserstein distance \( W(D, D') \). We decompose the persistence diagrams into disjoint unions:
\[
D = D_{\text{match}} \sqcup D_{\text{diag}}, \quad D' = D'_{\text{match}} \sqcup D'_{\text{diag}},
\]
where \( D_{\text{match}} \) is the set of points matched to \( D'_{\text{match}} \), and \( D_{\text{diag}} \), \( D'_{\text{diag}} \) are the points matched to the diagonal. Let \( D_{\text{match}} = \{u_1, \dots, u_k\} \), \( D'_{\text{match}} = \{v_1, \dots, v_k\} \), and assume without loss of generality that \(D_{\text{match}} \neq \varnothing\) and \( v_i = \tilde{\gamma}(u_i) \) for \( 1 \leq i \leq k \). The Wasserstein distance between \( D \) and \( D' \) is given by
\begin{align}\label{eq:1}
   W^2(D, D') = \sum_{i=1}^{k} \| u_i - v_i \|_2^2 + \sum_{i=k+1}^{k+l} \| u_i - \pi(u_i) \|_2^2 + \sum_{i=k+1}^{k+m} \| v_i - \pi(v_i) \|_2^2,
\end{align}
where \( \pi \) denotes the projection onto the diagonal. 

Now, consider the Wasserstein distance between the rotated diagrams \( D_r \) and \( D'_r \). 
Using the matching \(\gamma = R \tilde{\gamma} R^{-1}\) between \(D_r\) and \(D_{r}^{'}\), we have
\[
W^2(D_r, D'_r) \leq \sum_{i=1}^{k} \| Ru_i - Rv_i \|_2^2 + \sum_{i=k+1}^{k+l} \| Ru_i - \pi_x(Ru_i) \|_2^2 + \sum_{i=k+1}^{k+m} \| Rv_i - \pi_x(Rv_i) \|_2^2,
\]
where \( R \) is the rotation matrix and \( \pi_x \) is the projection in the birth-persistence plane.

Since rotation preserves distances, \( \| Ru_i - Rv_i \|_2 = \| u_i - v_i \|_2 \), and the Frobenius norm \( \| R \|_F \leq \sqrt{3} \) implies
\[
W^2(D_r, D'_r) \leq \sum_{i=1}^{k} 3 \| u_i - v_i \|_2^2 + 2 \sum_{i=k+1}^{k+l} \| u_i - \pi(u_i) \|_2^2 + 2 \sum_{i=k+1}^{k+m} \| v_i - \pi(v_i) \|_2^2.
\]
Thus, we obtain
\[
W^2(D_r, D'_r) \leq 3 W^2(D, D'),
\]
which gives the desired inequality
\[
W(D_r, D'_r) \leq \sqrt{3} W(D, D').
\]
The same proof holds when \(D_{\text{match}} = \varnothing\) if we use only the last two terms in the equation \ref{eq:1}.
\end{proof}

Now, the proof of Proposition~\ref{prop:stability} follows directly from Lemmas~\ref{lemma:projection} and~\ref{lemma:stability}.\\

\noindent \textbf{Proposition \ref{prop:stability}.} 
Let \( D \) and \( D' \) be two persistence diagrams, and suppose that \( \delta < W(D, D') \). Then the following inequality holds:
\[
W\Bigl( \pi_r(D), \pi_r(D') \Bigr) 
\leq \left( \frac{\sqrt{|D|}+\sqrt{|D^{'}|}}{\sqrt{2}}+ \sqrt{3} \right) W(D, D')
\]
\begin{proof}
   We begin by applying the triangle inequality in the Wasserstein metric space:
   \[
   W\left( \pi_r(D), \pi_r(D') \right) \leq W\left( \pi_r(D), D_r \right) + W\left( D_r, D_r' \right) + W\left( D_r', \pi_r(D') \right).
   \]
   The terms \( W\left( \pi_r(D), D_r \right) \) and \( W\left( D_r', \pi_r(D') \right) \) can be bounded using Lemma \ref{lemma:projection}, which gives
   \[
   W\left( \pi_r(D), D_r \right) \leq \sqrt{\frac{|D|}{2}} \delta 
   \quad \text{and} \quad 
   W\left( D_r', \pi_r(D') \right) \leq \sqrt{\frac{|D^{'}|}{2}} \delta.
   \]
   Next, using Lemma \ref{lemma:stability}, we bound the middle term:
   \[
   W\left( D_r, D_r' \right) \leq \sqrt{3} W(D, D').
   \]
   Thus, combining these inequalities, we obtain
   \[
   W\left( \pi_r(D), \pi_r(D') \right) 
   \leq \sqrt{\frac{|D|}{2}} \delta
   + \sqrt{3} W(D, D') 
   + \sqrt{\frac{|D^{'}|}{2}} \delta,
   \]
   which simplifies to
   \[
   W\left( \pi_r(D), \pi_r(D') \right) 
   \leq \left( \frac{\sqrt{|D|}+\sqrt{|D^{'}|}}{\sqrt{2}}+ \sqrt{3} \right) W(D, D').
   \]
   This completes the proof.
   \end{proof}

\section{Experimental Details}
\subsection{Comparison of Baselines in Graph Classification}
Table \ref{tab:baselines} shows the descriptions of the baseline models in graph classification in section \ref{sec:graph classification}. Note that even though the models in Table \ref{tab:graph} are based on persistent homology related inputs, the specific inputs of each model are different. 

\begin{table}[]
   \centering
   \caption{Summary of topological baselines in graph classification. MP indicates the model is based on multi-parameter persistent homology.}
   \begin{tabular}{c|c|c|c} 
   \multicolumn{1}{l}{}    & \multicolumn{1}{l}{}    & \multicolumn{1}{l}{}  & \multicolumn{1}{l}{}\\  
   Method                                                     & MP & Filtration                   & Explanation \\ \hline
   \begin{tabular}[c]{@{}l@{}}PersLay\\ ATOL\end{tabular}     &\(\times\) & \begin{tabular}[c]{@{}l@{}}- HKS\(_{0.1}\)\\ - HKS\(_{10.}\)\end{tabular} & \begin{tabular}[c]{@{}l@{}}Each diagram in the extended persistence diagrams \\ is transformed into a vector. Resulting 8 vectors in \\ total are concatenated to generate a representation.\end{tabular}\\ \hline

   \begin{tabular}[c]{@{}l@{}}Persformer\\ xPerT\end{tabular} &\(\times\) & - HKS\(_{1.0}\)  & Extended diagram is transformed to a vector.            \\ \hline

   GRIL                                                       &\(\circ\)  & \begin{tabular}[c]{@{}l@{}}- HKS\(_{1.0}\) + HKS\(_{10.}\)\\ \text{- Ricci curvature}\end{tabular} & \begin{tabular}[c]{@{}l@{}}Persistence landscape is computed based on \\the generalized rank invariant.\end{tabular} \\ \hline
   HSM-MP-SW    &\(\circ\)  & \begin{tabular}[c]{@{}l@{}}- HKS\(_{10.}\)\\ \text{- Ricci curvature}\end{tabular}                 & \begin{tabular}[c]{@{}l@{}}Sliced Wasserstein kernel is computed based on \\the signed barcode.\end{tabular}           
   \end{tabular}\label{tab:baselines}
\end{table}

\subsection{Hyperparameters} \label{sec:hyperparameters}
Table \ref{tab:hyperparameters} and \ref{tab:architecture} show the hyperparameters and the architecture of the xPerT and Persformer used in the experiments. The hyperparameters are chosen based on the performance of the models on the mean accuracy over 10-fold cross validations. The architecture of the xPerT is the same for all experiments, while the Persformer uses different architectures for graph and orbit datasets as in the original paper.

\begin{table}[]
   \centering
   \caption{Hyperparameters for the training of xPerT and Persformer on graph and orbit datasets.}
   \begin{tabular}{lllll}
                  & & & & \\
   \textbf{config}        & \textbf{\begin{tabular}[c]{@{}l@{}}xPerT\\ Graph\end{tabular}} & \textbf{\begin{tabular}[c]{@{}l@{}}xPerT\\ Orbit\end{tabular}} & \textbf{\begin{tabular}[c]{@{}l@{}}Persformer\\ Graph\end{tabular}}  & \textbf{\begin{tabular}[c]{@{}l@{}}Persformer\\ Orbit\end{tabular}} \\ \hline
   optimizer     & AdamW           & AdamW            & AdamW           & AdamW           \\
   learning rate & 1e-3            & 1e-4             & 1e-3            & 1e-3            \\
   weight decay  & 5e-2            & 5e-2             & 5e-2            & 5e-2            \\
   batch size    & 64              & 64               & 64              & 16              \\
   lr schedule   & cosine decay    & cosine decay     & cosine decay    & cosine decay    \\
   warmup epochs & 50              & 50               & 50              & 50              \\
   epochs        & 300             & 300              & 300             & 300
\end{tabular} \label{tab:hyperparameters}
\end{table}

\begin{table}[]
   \centering
   \caption{Transformer architecture for xPerT and Persformer used in the experiments. Note that the persformer uses different architectures for graph and orbit datasets as in the original paper.}
   \begin{tabular}{llll}
               & & & \\
               & \textbf{\begin{tabular}[c]{@{}l@{}}xPerT\\ \end{tabular}} & \textbf{\begin{tabular}[c]{@{}l@{}}Persformer\\ graph\end{tabular}} & \textbf{\begin{tabular}[c]{@{}l@{}}Persformer\\ orbit\end{tabular}} \\ \hline
   \# layers  & 5                                                                       & 2                                                                     & 5                                                                     \\
   \# heads   & 8                                                                       & 4                                                                     & 8                                                                     \\
   token dim. & 192                                                                     & 32                                                                    & 128                                                                  
\end{tabular} \label{tab:architecture}
\end{table}

\section{Persistent Homology} \label{sec:persistent homology}
This section provides a brief introduction to persistent homology, which is a mathematical tool for studying the topological features of a space. A \textit{filtration} is a key object in construction of persistent homology, which is an increasing sequence \((X)_{t>0}\) of subspaces of a space \(X\). The \textit{persistent homology of dimension \(k\)} is a sequence of vector spaces \(\{H_{k}(X_{t}; \sF)\}_{t>0}\) and linear maps \(\{H_{k}(X_{t};\sF) \rightarrow H_{k}(X_{t^{'}};\sF)\}_{t \leq t^{'}}\), which captures the topological features of the space as the filtration parameter \(t\) varies. Here, \(\sF\) is a field, which is usually taken to be the finite feild \(\sZ_{2}\), and \(H_{k}\) denotes the \(k\)-th homology group. Monitoring the evolution of homological features via linear maps \(\{H_{k}(X_{t};\sF) \rightarrow H_{k}(X_{t^{'}};\sF)\}_{t \leq t^{'}}\) allows associating
an interval \([b, d]\), where \(b\) and \(d\) are birth and death times, respectively. For a detailed introduction to homology and persistent homology, see \cite{computationaltopology}.

\paragraph{Filtration on Graphs}
Given a graph \(G=(V,E)\), lef \(f:V \rightarrow \sR\) be a function defined on the vertices of the graph. 
A sublevel set filtration \(\big(G_{t}\big)_{t>0}\) is an increasing sequence of subgraphs of \(G\) defined as follows. For each \(t > 0\), the sublevel set \(G_{t}\) is a subgraph of \(G\) whose vertices are the vertices in \(V\), and whose edges are the edges in \(E\) that connect the vertices whose function values are less than or equal to \(t\). Formally, 
\[
V(G_{t}) = \{v \in V \mid f(v) \leq t\}, \quad E(G_{t}) = \{(u,v) \in E \mid f(u) \leq t, f(v) \leq t\}.
\]

In this paper, we use the heat kernel signature (HKS) as the function \(f\), which is a function defined on the vertices of the graph that provides the local geometric information of the graph. The usual sublevel set filtration is used for generate the ordinary persistent homology.

For extended persistent homology, we use a filtration that combines the sublevel set and superlevel set filtrations. Given a value \(t \in \sR\), the superlevel set \(G^{t}\) of the graph \(G\) is defined as
\[
V(G^{t}) = \{v \in V \mid f(v) \geq t\}, \quad E(G^{t}) = \{(u,v) \in E \mid f(u) \geq t, f(v) \geq t\}.
\]
Without loss of generality, assume that the minimum value of \(f\) is 0. Then the extended filtration is given by 
\[
   \tilde{G}_t = \begin{cases}
      G_{t} & \text{if } t \leq M, \\\\
      G/G^{2M-t} & \text{otherwise},
   \end{cases}
\] 
where \(M\) is the maximum value of the function \(f\). Here \(G/G^{2M-t}\) is the quotient graph obtained by contracting the vertices in \(G^{2M-t}\) to a single vertex. Note that the extended filtration is not a true filtration, as it does not satisfy the monotonicity condition. Still, we can apply the homology to the sequence of graphs \(\big(\tilde{G}_t\big)_{t>0}\) to compute the extended persistence diagram, which reflects the topological features of the graph at different scales.

\paragraph{Filtration on Point Cloud}
Given a point cloud \(X = \{x_1, \dots, x_n\} \subset \sR^{d}\), the \textit{Rips filtration} \(\big(R_{t}(X)\big)_{t>0}\) is an increasing sequence \(\big(R_{t}(X)\big)_{t>0}\) of simplicial complexes defined as follows. For each \(t > 0\), the Rips complex \(R_{t}(X)\) is a simplicial complex whose vertices are the points in \(X\), and whose simplices are the subsets of \(X\) that are pairwise within distance \(t\). Formally, a simplex \(\sigma \subset X\) is in \(R_{t}(X)\) if the diameter of \(\sigma\) is less than or equal to \(t\). 

However, the computation of the Rips filtration can be quite costly even in a low-dimensional space. 
To reduce the computational cost, following \cite{persformer}\footnote{The paper mentions that the authors have used the alpha filtration, but their code uses weak alpha filtration.}, we use the \textit{weak alpha complex}, which is a Rips complex defined on the Delaunay triangulation of the point cloud.

\paragraph{Persistence Diagram}
A persistence diagram is a collection of the pairs of birth and death times of topological features obtained from
persistent homology. More concretely, the persistence diagram is a multiset of points in the extended plane \(\sR \times (\sR \cup \{\infty\})\), where each point \((b, d)\) represents a topological feature that is born at time \(b\) and dies at time \(d\). 

\section{Tokenization of Ordinary Persistence Diagrams} \label{sec:Tokenization OPD}
Given a point cloud \(X\), let \(D_{k}\) be its \(k\)-dimensional persistence diagram, computed using one of the point cloud filtrations (e.g., Rips filtration or weak alpha filtration). For machine learning input, we typically use more than one diagram, \(D = \{D_{0}, \dots, D_{k}\}\). As with the extended persistence diagram, we could stack the corresponding PPDs. However, the points in the 0-dimensional persistence diagram \(D_{0}\) typically lie only along the \(y\)-axis, while points from higher-dimensional diagrams are distributed in the 2D plane. This makes it unnatural to apply the channel-stacking tokenization method described in \ref{sec:Tokenization}. Instead, we tokenize each diagram separately and combine them into a single sequence. More concretely, let \(\{\ppd_{H}(D_{0}), \dots, \ppd_{H}(D_{k})\}\) be the PPDs of \(D_{0}, \dots, D_{k}\), respectively. Each PPD is treated as a single-channel image, and we apply the same tokenization method to each PPD as described in \ref{sec:Tokenization}. Finally, we collect all tokens from each diagram into a single sequence.

\section{Visualization of \texttt{ORBIT5K} dataset.}
Each sample in the \texttt{ORBIT5K} dataset is a point cloud generated from a dynamical system:
\[
\begin{aligned}
    x_{n+1} &= x_n + ry_n(1 - y_n) &\mod 1 \\
    y_{n+1} &= y_n + rx_{n+1}(1 - x_{n+1}) &\mod 1
\end{aligned}
\]
where \(r\) is a parameter that determines the behavior of the system. The Figure~\ref{fig:orbits} shows the examples of the orbit datasets with different values of \(r\). 
\begin{figure}[h]
   \centering
   \includegraphics[scale=0.27]{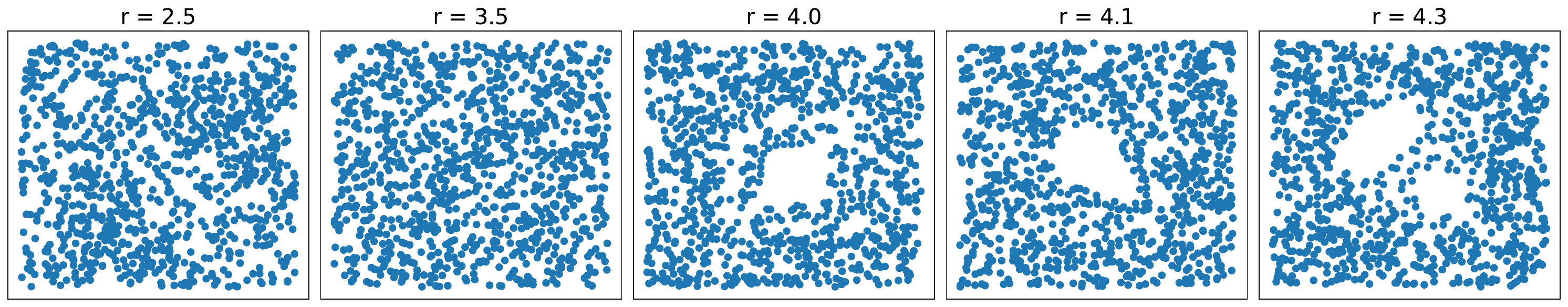}
   \caption{Examples of orbit datasets with different value of \(r\).}\label{fig:orbits}
\end{figure}

\end{document}